\DeclareMathOperator*{\aff}{aff}
\newcommand{\comm}[1]{\textcolor{blue}{#1}}
\newcommand{\red}[1]{\textcolor{red}{#1}}
\begin{document}
\title{Reachability Analysis  for
Feed-Forward Neural Networks using Face Lattices}
%
%
\author{Xiaodong Yang\inst{1} \and Hoang-Dung Tran\inst{1} \and Weiming Xiang\inst{2} \and Taylor Johnson\inst{1} }
\institute{}
\institute{Vanderbilt University, Nashville, TN 37212, USA \\
\email{\{xiaodong.yang,dung.h.tran, taylor.johnson\}@vanderbilt.edu}\\
\and Augusta University, Augusta, GA 30912 \\ \email{xiangwming@gmail.com}}

\maketitle              
\begin{abstract}
Deep neural networks have been widely applied as an effective approach to handle complex and practical problems. However, one of the most fundamental open problems is the lack of formal methods to analyze the safety of their behaviors. To address this challenge, we propose a parallelizable technique to compute exact reachable sets of a neural network to an input set. Our method currently focuses on feed-forward neural networks with ReLU activation functions. One of the primary challenges for polytope-based approaches is identifying the intersection between intermediate polytopes and hyperplanes from neurons. In this regard, we present a new approach to construct the polytopes with the face lattice, a complete combinatorial structure. The correctness and performance of our methodology are evaluated by verifying the safety of ACAS Xu networks and other benchmarks. Compared to state-of-the-art methods such as Reluplex, Marabou, and NNV, our approach exhibits a significantly higher efficiency. Additionally, our approach is capable of constructing the complete input set given an output set, so that any input that leads to safety violation can be tracked.

\keywords{Reachability analysis  \and Neural network \and Formal verification.}
\end{abstract}
\section{Introduction}
Deep neural networks (DNNs) have been playing a critical role in handling complex and practical problems and are being widely applied in safety-critical, autonomous cyber-physical systems (CPS). However, one major obstacle in realizing autonomous CPS is the lack of formal methods to provide guarantees on functional behaviors, such as in safety-critical systems like autonomous motor vehicles and robotic surgery machines. A slight perturbation in the input of a neural network may lead to an error behavior in output \cite{moosavi2016deepfool}. Recently, there has been significant effort to develop methods to establish robustness and formal guarantees of learning-enabled components (LECs) like DNNs~\cite{pulina2010abstraction,huang2017safety,gehr2018ai2,dutta2017output,katz2017reluplex,xiang2017reachable,xiang2018output,wang2018formal,zhang2018efficient,singh2018fast,singh2019abstract,wang2018efficient,tran2019parallelizable,tran2019fm}. There are two primary classes of reachability methodologies to formally analyze DNNs, over-approximation and exact (or complete) analysis. The over-approximation methods are mainly based on mixed-integer linear programs (MILPs)~\cite{lomuscio2017approach,dutta2017output,kouvaros2018formal}, zonotopes~\cite{gehr2018ai2}, abstract domain ~\cite{singh2019abstract}, and linearization~\cite{weng2018towards,zhang2018efficient}. These methods can only guarantee the soundness of the analysis but not completeness. The MILPs-based work ~\cite{dutta2017output} can guarantee both aspects when neural networks have only one output, but fails when they have multiple outputs. Because it estimates the range of outputs independently and eventually generates a box domain that over approximates exact reachable sets. Most of the over-approximation methods are capable of efficiently analyzing large scales neural networks based ReLU activation functions. Their main strategy is replacing each ReLU function with a more conservative domain so that the number of output reachable sets of each layer can be largely reduced. But these approaches are only limited to point-wise inputs with a very little perturbation, and their conservativeness of the estimated reachable sets will exponentially grow as the input domain increases. This is already studied in \cite{tran2019fm}.


While the exact analysis is mainly based on the satisfiability modulo theory (SMT)~\cite{katz2017reluplex,katz2019marabou}, polytopes~\cite{xiang2017reachable} and Star set in a tool named NNV~\cite{tran2019fm}. Through these approaches, both soundness and completeness of the verification can be guaranteed. The strategy of Reluplex~\cite{katz2017reluplex} is extending the simplex method to handle the piece-wise linear ReLU activation function by allowing variables of defined ReLU pairs to temporarily violate their semantics. The Marabou~\cite{katz2019marabou} is an improved version of Reluplex. It supports arbitrary piecewise-linear activation function, parallel computation, etc. Another work ReluVal~\cite{wang2018formal} is based on interval arithmetic to over approximate the bounds on the outputs. The influence of each input variable on the output is analyzed so that it
can repeatedly split the input intervals and efficiently refine the output range. All these works focus on the satisfiability problem that is to determine whether there exists output that locates in the unsafe domain. While the works~\cite{xiang2017reachable,tran2019fm} are conducting the reachability analysis of a neural network. Given an input set, they can compute the exact output sets of a neural network. This reachability analysis is also associated with the quantification of linear regions of neural networks~\cite{montufar2014number,serra2017bounding,hanin2019complexity}. A linear region of a piecewise linear functions $F: \mathbb{R}^n\rightarrow \mathbb{R}^m$ refers to a maximum convex subset of an input set in $\mathbb{R}^n$, on which the function $F$ is linear. Accordingly, the input set is splitted by the ReLU function in each neuron into pieces which are linear regions. Each output set of a network computed by \cite{xiang2017reachable,tran2019fm} corresponds to the output with respect to a linear region. Therefore, the number of output sets computed is equal to the number of linear regions. These methods can provide a full understanding of the neural network's behavior and are promising directions towards safe networks.

However, improvement of the efficiency is challenging. The work~\cite{xiang2017reachable} is constructing the input set with a polytope. It has three basic operations when a polytope passes through one layer of a neural network. They are respectively, \textit{affine transformation} by the weight matrix and bias vector between layers, \textit{intersection and division} by hyperplanes from the non-differentiable point at $0$ in ReLU function of each neuron, and \textit{projection} on the hyperplane due to the property of the negative domain in ReLU function. One of the primary challenges is to identify the intersection of a polytope with a hyperplane. To solve this problem, they use $H$-representation ($H$-rep) which is a set of linear inequalities to check the feasibility with another linear inequality from the hyperplane. This computation is treated as a linear programming problem. For the \textit{affine transformation} and \textit{projection}, the $H$-rep is transformed to $V$-representation ($V$-rep) which is a set of vertices. However, the change between $H$-rep and $V$-rep are respectively a vertex enumeration problem and a facet enumeration problem in computational geometry, both of which have high computational complexity. Therefore, the constant representation switching and feasibility inspecting lead to undesirable efficiency. The work~\cite{tran2019fm} proposed the Star set which can avoid the transformation between $H$-rep and the $V$-rep. Instead, it identifies the intersection using optimization. But a large number of neurons will yield too many optimization processes and therefore impact its efficiency.

As a complementary approach, we present a novel face-lattice-based and parallelizable methodology that is capable of the sound and complete reachability analysis with higher efficiency. This method currently focuses on feed-forward neural networks with ReLU activation. To overcome the challenge, we develop an intuitive approach that is to inspect the distribution of a polytope's vertices on the sides of the hyperplane. If vertices scatter on both sides of or on the hyperplane, the intersection happens. Otherwise, it doesn't. The following problem is how to identify polytopes and their vertices generated from this intersection so that their complete structure information can be maintained for future operations. Here, we introduce the face lattice to encode the complete combinatorial structure for polytopes and preserves the adjacency between faces of the polytope, so that those three types of operations on polytopes can be more efficiently conducted.

\section{Preliminaries}

\subsection{Feedforward Neural Networks}
A FNN consists of one input layer, multiple hidden layers, and one output layer. Each layer contains multiple neurons which are interconnected with neurons in the next layer by weights in a feed-forward way. The output of each neuron is associated with three components: its input weight $\omega$, input bias $b$ and the activation function $f$, as shown in:
\begin{equation*}
    y_i = f(\sum\nolimits_{j=1}^{n}\omega_{i,j}x_{j} + b_j)
\end{equation*}
where $\omega_{ij}$ and $b_j$ are respectively the weight and bias from the $j$th neuron of the previous layer to the $i$th neuron of the current layer, and $x_{j}$ is an input to this neuron and also the output of $j$th neuron in the front layer, and $y_{i}$ is the output of the $i$th neuron. In this paper, we consider the ReLU activation which is defined as $ReLU(x) = \max(0,x)$. Let $W_{(k,k\text{-}1)}$, $b_k$ denote the weight matrix, the bias vector between the $(k\text{-}1)$th layer and $k$th layer, and $X_{k\text{-}1}$ be its input consisting of elements $x_j$, then the output of the $k$th layer will be
\begin{equation}
    \Phi_k(X_{k}) =  ReLU(W_{(k,k\text{-}1)}X_{k} + b_k)
    \label{equ:layer}
\end{equation}
For the first hidden layer, its input $X_{0}$ is also the input to the network. Besides, the output of one layer is also an input of the next layer. Therefore, given an input $X_{0}$, the output $Y_{k}$ of the $k$th hidden layer will be
\begin{equation}
    Y_{k} = \Phi_{k}(\Phi_{k-1}(\dots (\Phi_{1}(X_{0}))))
\end{equation}

\subsection{Face Lattices}
The geometric background of the face lattice structure is introduced in this section. It includes the concepts of supporting hyperplane, faces of polytopes as well as the properties of this structure. More other geometric details can be found in work~\cite{henk200416}.

\begin{definition}[Supporting Hyperplane]
    A hyperplane $\mathcal{H}$ denoted by $\textbf{a}^{\top}\textbf{x}=b$ is supporting polytope $P$ if one of its closed halfspaces, $\textbf{a}^{\top}\textbf{x}\le b$ or $\textbf{a}^{\top}\textbf{x}\ge b$ contains $P$.
    \label{def:sh}
\end{definition}

\begin{definition}[Face]
    The face of a convex polytope is an intersection of this polytope with a supporting hyperplane. When the polytope is full dimensional, the face is named a \textit{nontrivial face}. The polytope itself and the empty set are also its faces which are called \textit{trivial faces}. When the dimension of $\aff(P\cap \mathcal{H})$ is $k$, the face is denoted as $k\textbf{-}f$ or $k\textbf{-}face$. The function $\aff(S)$ indicates the affine hull of S, which is the smallest affine set that contains S.
    \label{def:face}
\end{definition}

A d-dimensional convex polytope $P$ is called full dimensional if it locates in $\mathbb{R}^{n=d}$, otherwise it is not full dimensional if it is in $\mathbb{R}^{n>d}$. The convex polytope $P$ consists of different dimensional daces. Regardless of $P$'s dimension and the space it locates in, it contains a set of $0\textbf{-}$faces, $1\textbf{-}$faces, $\dots$, $(d\text{-}2)\textbf{-}$faces, $(d\text{-}1)\textbf{-}$ faces as well as the empty set and itself. The $0\textbf{-}$face, $1\textbf{-}$face, $(d\text{-}2)\textbf{-}$face and $(d\text{-}1)\textbf{-}$ face are respectively named $vertex$, $edges$, $ridges$ and $facets$. 

The face lattice $\mathcal{L}(P)$ is a complete combinatorial structure that contains all faces of $P$ and partially orders them by face containment. One face $\mathcal{H}_i$ is said to contain a face $\mathcal{H}_j$ if the $\mathcal{H}_j$ is a subset of the $\mathcal{H}_i$. A tetrahedron and its face lattice are shown in (a) and (b) of Figure \ref{fig:combine}. The empty face $\emptyset$ is not considered for a conciser presentation. Besides the $3\textbf{-}face$ which is the tetrahedron itself, there are 14 faces described by blue blocks and their dimension ranges from 0 to 2. The face containment only consider higher-dimensional faces containing adjacent lower-dimensional faces. For instance, the $1\textbf{-}f_1$ ($edge_{2\textbf{-}3}$) contains $0\textbf{-}f_2$ ($vertex_2$) and $0\textbf{-}f_3$ ($vertex_3$), and the $1\textbf{-}f_1$ is contained in $2\textbf{-}f_1$ ($plane_{2\textbf{-}3\textbf{-}4}$) and $2\textbf{-}f_2$ ($plane_{1\textbf{-}2\textbf{-}3}$). The containment relation is expressed in a grey connection line. All the polytopes refer to convex polytopes that are expressed in the face lattice structure in the following sections

\section{Operations of Polytopes}
As discussed in the introduction, there are three basic operations in processing polytopes in the neural network. In terms of the sequential order, they are respectively, the \textit{affine transformation} by weights, \textit{intersection and division} by a hyperplane $\mathcal{H}$, and the \textit{projection} on $\mathcal{H}$. The affine transformation is mapping the current polytope to another space. The combinatorial theory considers polytopes that differ only by a change of coordinates (an affine transformation) are equivalent~\cite{henk200416}. The main reason is that the \textit{affine transformation} only changes vertices but preserve polytopes' combinatorial structure. Therefore, its face lattice stays unchanged under this transformation. The \textit{projection} on a hyperplane is orthogonally mapping the polytope on this hyperplane. Similarly to the affine transformation, it only works on its vertices. The rest of this section only focuses on the\textit{ intersection and division}. 

\begin{figure*}[ht]
  \includegraphics[scale = 0.55]{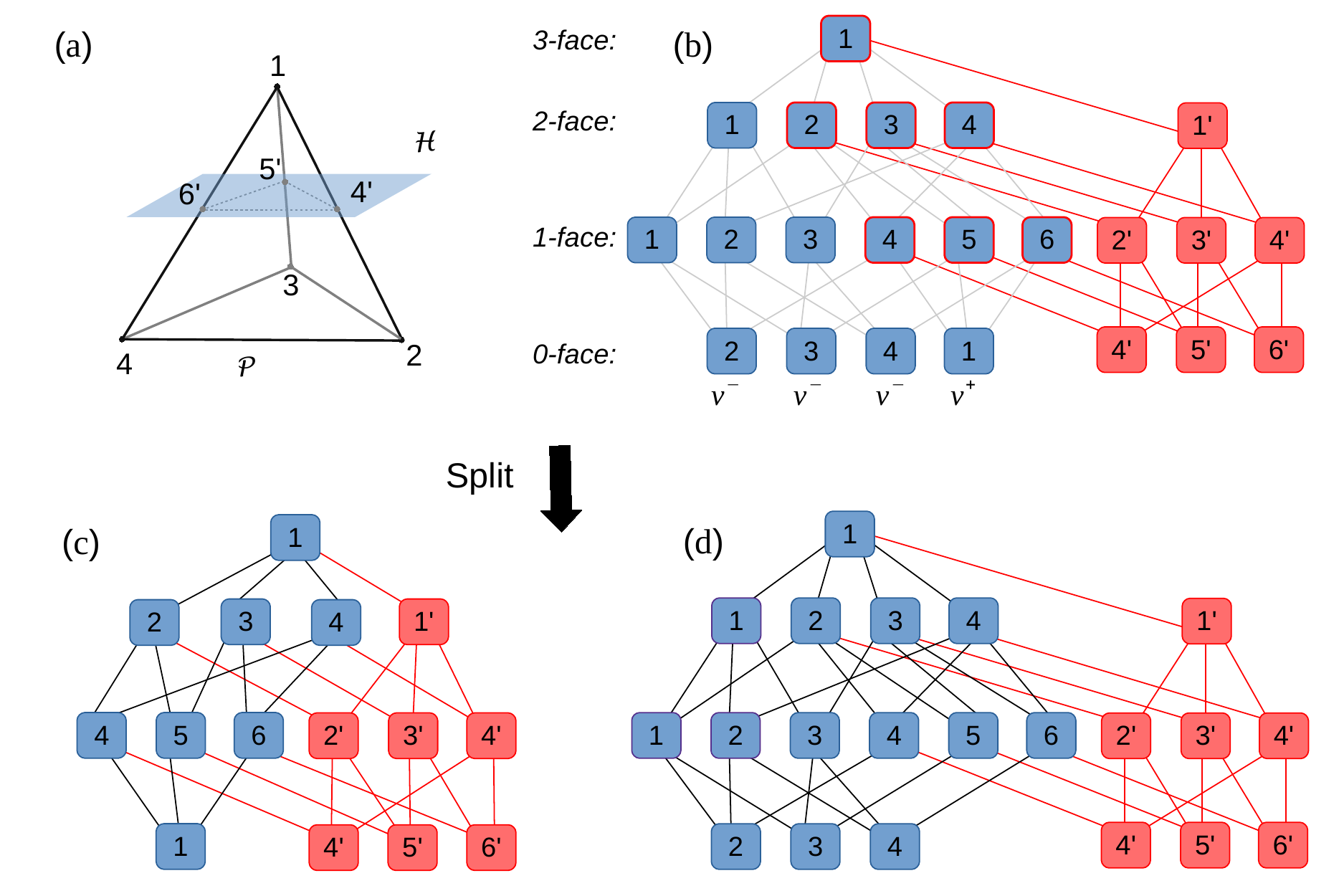}
  \centering
  \caption{A demonstration of the face lattice of a tetrahedron $P$ and its\textit{ intersection and division.} by a hyperplane $\mathcal{H}$. In (b), the blocks filled with blue and connected by grey lines is its face lattice which structure is based on the Hasse diagram. The number of each entry represents the face's dimension.  The grey connection only allows that high-dimensional faces contain low-dimensional faces. The blocks with red frames and blue color are the faces intersecting with $\mathcal{H}$ in $P$, which are extracted from $positive$ and $negative$ vertices in terms of the Lemma \ref{th1}. The face lattice of new generated faces is in red, which is extracted according to the Lemma \ref{th2}. The red connection between blue blocks and red blocks denote the containment relation between faces in $P$ and new faces. $(c)$ and $(d)$ represent the face lattice of the $positive$ and $negative$ polytopes, respectively.}
  \label{fig:combine}
\end{figure*}

\subsection{Intersection of Face Lattices with Hyperplanes}
\label{sec:inter}
Since the face lattice contains all the information of faces adjacency, determination of an intersection between a polytope $P$ and a hyperplane $\mathcal{H}$ can be simplified to finding the intersection of edges in $P$ with $\mathcal{H}$. It then can be realized by checking the distribution of edges' vertices on the side of $\mathcal{H}$. Let the $\mathcal{H}$ be described by $\textbf{a}^{\top}\textbf{x} + b = 0$. To compute the vertices' distribution, we can substitute the $\textbf{x}$ with vertices. Let vertices such that $\textbf{a}^{\top}\textbf{x}$ are greater than $0$, smaller than $0$, and equal to $0$ be respectively called $positive$ vertices $v^{+}$, $negative$ vertices $v^{-}$, and $zero$ vertices $v^{0}$. Accordingly, there are three cases in the \textit{intersection and division}:
\begin{enumerate}
    \item[(1)] There exist both $v^{+}$ and $v^{-}$. In this case, the hyperplane $\mathcal{H}$ intersects with $P$, and a set of new faces $S_f^*$ of dimensions between $0$ and $(d\text{-}1)$ are generated, where $f \subseteq \mathcal{H}$ and $f \subseteq P$ for all $f \in S_f^*$. The original polytope is thus divided into two non-empty sub-polytopes $P_{sub}^{+}$ which consists of nonnegative vertices, and $P_{sub}^{-}$ which consists of nonpositive vertices.
    \item[(2)] There are no $v^{-}$s. In such cases, polytope $P$ is contained in the closed halfspace $\textbf{a}^{\top}\textbf{x} \ge 0$. The sub-polytops $P_{sub}$ generated from this operation is the $P$ itself. It is a positive polytope $P_{sub}^{+}$ 
    \item[(3)] There are no $v^{+}$s. In such cases, polytope $P$ is contained in the closed halfspace $\textbf{a}^{\top}\textbf{x} \le 0$. The sub-polytops $P_{sub}$ generated from this operation is also the $P$ itself. It is a negative polytope $P_{sub}^{-}$
\end{enumerate}

Here we focus on the first case where the intersection occurs. To identify those two sub-polytopes, we first need to find the faces of $P$ that yield the $S_f^*$ in the intersection. Edges with both $positive$ and $negative$ vertices or at least having one $zero$ vertex are said to have intersection with the hyperplane $\mathcal{H}$. In terms of the Lemma \ref{th1}, higher-dimensional faces that also intersect with the $\mathcal{H}$ can be identified from these edges. Thus all faces of $P$ that intersect with $\mathcal{H}$ are obtained. Intersecting with $\mathcal{H}$, each of such face will generate a new face that is one-dimension lower. These new faces the $S_f^*$. The process is demonstrated in the (b) of Figure \ref{fig:combine}. For instance, the $2\textbf{-}f_2$ ($plane_{1\textbf{-}2\textbf{-}3}$) intersects with $\mathcal{H}$ and generates a new face $1\textbf{-}f_{2'}$ ($edge_{4'\textbf{-}5'}$) whose vertices $0\textbf{-}f_{4'}$ ($vertex_{4'}$) and $0\textbf{-}f_{5'}$ ($vertex_{5'}$) are respectively generated from the intersection of $\mathcal{H}$ with $1\textbf{-}f_4$ ($edge_{1\textbf{-}2}$) and $1\textbf{-}f_5$ ($edge_{1\textbf{-}3}$) .

Lemma \ref{th2} states that the face containment relation of the new faces $S_f^*$ is inherited from faces' in the polytope. An example is shown in (b) of Figure \ref{fig:combine}. The red blocks represent the new faces set $S_f^*$, and the red connection between the red blocks represent the inherited face containment relation. For instance, the relation that the new face $1\textbf{-}f_{3'}$ ($edge_{5'\textbf{-}6'}$) contains the new face $0\textbf{-}f_{6'}$ ($vertex_{6'}$) is derived from the relation that $2\textbf{-}f_{3}$ ($plane_{1\textbf{-}3\textbf{-}4}$) contains $1\textbf{-}f_{6}$ ($edge_{1\textbf{-}4}$) in $P$. Overall, these two steps in $intersection$ generate a new face lattice that includes all the faces of $P$ and all the new generated faces $S_f^*$. Then as shown in (c) and (d), the \textit{division} is splitting this face lattice into the $P_{sub}^{+}$ and $P_{sub}^{-}$ according to vertices.

\begin{lemma}
    \label{th1}
    If a $k\textbf{-}f$(k-dimensional face) intersects with a hyperplane $\mathcal{H}$, all the $(k\text{+}1)\textbf{-}f$s where $k\textbf{-}f \subseteq (k\text{+}1)\textbf{-}f$ also intersect with $\mathcal{H}$.
\end{lemma}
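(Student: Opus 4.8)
The plan is to reduce this geometric statement to a bookkeeping fact about the vertex sets of faces, since vertex signs are exactly the data the algorithm manipulates. Throughout, write $g(\mathbf{x}) = \mathbf{a}^{\top}\mathbf{x} + b$ for the affine form defining $\mathcal{H}$, so that a face meets $\mathcal{H}$ precisely when $g$ vanishes somewhere on it. The first step is to make the vertex characterization of ``intersects'' explicit and to lift it from edges to arbitrary faces. Since every face is the convex hull of its own vertices and $g$ is affine, the image $g(F)$ is the interval $[\min_i g(v_i),\, \max_i g(v_i)]$ with the $v_i$ ranging over the vertices of $F$. Hence $F \cap \mathcal{H} \neq \emptyset$ iff $0 \in g(F)$ iff $F$ carries a vertex with $g \ge 0$ and a vertex with $g \le 0$, i.e. iff $F$ has both a $v^{+}$ and a $v^{-}$, or at least one $v^{0}$. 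This is the edge test of Section~\ref{sec:inter} applied verbatim to a general $k\textbf{-}f$.

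The second, and central, step is the lattice fact that a face inherits its vertices from any face above it: if $k\textbf{-}f \subseteq (k\text{+}1)\textbf{-}f$ in $\mathcal{L}(P)$, then $\mathrm{vert}(k\textbf{-}f) \subseteq \mathrm{vert}((k\text{+}1)\textbf{-}f)$. This follows because $k\textbf{-}f$ being a face of $(k\text{+}1)\textbf{-}f$ means $k\textbf{-}f = (k\text{+}1)\textbf{-}f \cap \mathcal{H}'$ for a supporting hyperplane $\mathcal{H}'$ of $(k\text{+}1)\textbf{-}f$ (Definition~\ref{def:face}); the extreme points of the intersection are then extreme points of $(k\text{+}1)\textbf{-}f$ that happen to lie on $\mathcal{H}'$, so each vertex of $k\textbf{-}f$ is a vertex of $(k\text{+}1)\textbf{-}f$.

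Combining the two steps closes the argument at once. Assume the $k\textbf{-}f$ intersects $\mathcal{H}$. By Step~1 it owns a vertex $v_1$ with $g(v_1) \ge 0$ and a vertex $v_2$ with $g(v_2) \le 0$ (with $v_1 = v_2$ in the degenerate $v^{0}$ case). By Step~2, $v_1$ and $v_2$ are also vertices of every $(k\text{+}1)\textbf{-}f$ containing the $k\textbf{-}f$. Then $g$ takes a nonnegative and a nonpositive value on the vertex set of that $(k\text{+}1)\textbf{-}f$, so $0 \in g((k\text{+}1)\textbf{-}f)$ and the $(k\text{+}1)\textbf{-}f$ meets $\mathcal{H}$, which is the claim.

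As for difficulty, I expect the statement to be essentially forced: at the level of sets one already has $k\textbf{-}f \subseteq (k\text{+}1)\textbf{-}f \Rightarrow (k\textbf{-}f \cap \mathcal{H}) \subseteq ((k\text{+}1)\textbf{-}f \cap \mathcal{H})$, so nonemptiness propagates upward for free. The only point needing genuine care is Step~1 — verifying that the sign test on vertices is \emph{equivalent} to set-theoretic intersection, so that the algorithm's vertex bookkeeping is faithful, and that the boundary case of a lone $v^{0}$ is treated consistently. Step~2 is standard for face lattices, and since neither step uses that the dimensions differ by exactly one, the same reasoning in fact propagates intersection from any face to all faces lying above it, which is what the bottom-up sweep in Section~\ref{sec:inter} relies on.
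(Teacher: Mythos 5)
Your proof is correct, but it takes a genuinely different route from the paper's. The paper's argument is the one you relegate to your closing remark: from $k\textbf{-}f \subseteq (k\text{+}1)\textbf{-}f$ it deduces $k\textbf{-}f \cap \mathcal{H} \subseteq (k\text{+}1)\textbf{-}f \cap \mathcal{H}$, and since the left side is nonempty by hypothesis, so is the right side --- two lines, purely set-theoretic monotonicity, using nothing about polytopes, faces, or vertices at all. You instead route everything through the algorithm's data: Step~1 proves that the vertex-sign test (a $v^{+}$ together with a $v^{-}$, or a $v^{0}$) is \emph{equivalent} to set-theoretic intersection, via $g(F) = [\min_i g(v_i), \max_i g(v_i)]$ for $F$ the convex hull of its vertices and $g$ affine; Step~2 is the standard lattice fact $\mathrm{vert}(k\textbf{-}f) \subseteq \mathrm{vert}((k\text{+}1)\textbf{-}f)$; and you then propagate the witnessing vertices upward. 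Both steps are sound (your handling of the lone-$v^{0}$ degenerate case and the supporting-hyperplane justification of vertex inheritance are fine), so the argument closes correctly. What the comparison buys: the paper's proof is strictly more economical and, as you note, needs neither the dimension gap of one nor convexity; your version is heavier machinery for the lemma as stated, but it proves something the paper only uses implicitly and never verifies --- namely that the sign-based edge/vertex test of Section~\ref{sec:inter} is a \emph{faithful} detector of intersection for every face, which is the actual soundness claim behind the bottom-up sweep. In that sense your Step~1 is a small complementary lemma the paper arguably owes the reader, while for Lemma~\ref{th1} itself the two-line containment argument is the right tool.
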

\begin{proof}
    $k\textbf{-}f \subseteq (k\text{+}1)\textbf{-}f$ indicates $k\textbf{-}f\cap \mathcal{H} \subseteq (k\text{+}1)\textbf{-}f\cap \mathcal{H}$. Since $k\textbf{-}f\cap \mathcal{H} \neq \emptyset$, then $(k\text{+}1)\textbf{-}f\cap \mathcal{H} \neq \emptyset$. Therefore, $(k\text{+}1)\textbf{-}f$ also intersect with $\mathcal{H}$.
\end{proof}

\begin{lemma}
    \label{th2}
    Given two faces $k\textbf{-}f$ and $(k\text{+}1)\textbf{-}f$ where $k\textbf{-}f \subseteq (k\text{+}1)\textbf{-}f$, and that they both intersect with a hyperplane $\mathcal{H}$ where $k\textbf{-}f \cap \mathcal{H} = (k\text{-}1)\textbf{-}f^{'}$ and $(k\text{+}1)\textbf{-}f \cap \mathcal{H} = k\textbf{-}f^{'}$, then $(k\text{-}1)\textbf{-}f^{'} \subseteq k\textbf{-}f^{'}$.
\end{lemma}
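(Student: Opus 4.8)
The plan is to observe that this is, structurally, the same one-line argument as Lemma~\ref{th1}: it rests entirely on the monotonicity of set intersection under inclusion. Recall that in the face lattice, ``containment'' was defined purely set-theoretically — a face $\mathcal{H}_i$ contains $\mathcal{H}_j$ exactly when $\mathcal{H}_j \subseteq \mathcal{H}_i$. So the conclusion $(k\text{-}1)\textbf{-}f^{'} \subseteq k\textbf{-}f^{'}$ is literally a subset claim, and I expect no geometric machinery beyond elementary set reasoning to be required.

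First I would record the elementary fact that for any sets $A, B$ with $A \subseteq B$ and any hyperplane $\mathcal{H}$, one has $A \cap \mathcal{H} \subseteq B \cap \mathcal{H}$. Then I would instantiate this with $A = k\textbf{-}f$ and $B = (k\text{+}1)\textbf{-}f$, which is legitimate precisely because the hypothesis supplies $k\textbf{-}f \subseteq (k\text{+}1)\textbf{-}f$. This yields $k\textbf{-}f \cap \mathcal{H} \subseteq (k\text{+}1)\textbf{-}f \cap \mathcal{H}$. Finally I would substitute the two identities given in the statement, namely $k\textbf{-}f \cap \mathcal{H} = (k\text{-}1)\textbf{-}f^{'}$ and $(k\text{+}1)\textbf{-}f \cap \mathcal{H} = k\textbf{-}f^{'}$, to conclude $(k\text{-}1)\textbf{-}f^{'} \subseteq k\textbf{-}f^{'}$, which is exactly the asserted face-containment relation.

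I do not anticipate a genuine obstacle here; the substance of the lemma is delegated to its hypotheses, which already assert that the two intersections are faces of the stated dimensions. The only point I would be mildly careful about is well-definedness: both intersections are nonempty because the hypothesis states that $k\textbf{-}f$ and $(k\text{+}1)\textbf{-}f$ each intersect $\mathcal{H}$, so $(k\text{-}1)\textbf{-}f^{'}$ and $k\textbf{-}f^{'}$ are honest (nontrivial) faces rather than the empty face, and the derived inclusion is therefore a meaningful edge in the new face lattice. Since the containment itself holds regardless, the monotonicity step carries the whole argument, and the proof is a two-line chain of inclusions.
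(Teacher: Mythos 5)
Your proposal is correct and is essentially identical to the paper's proof: both arguments consist of the single monotonicity step $k\textbf{-}f \subseteq (k\text{+}1)\textbf{-}f \Rightarrow k\textbf{-}f \cap \mathcal{H} \subseteq (k\text{+}1)\textbf{-}f \cap \mathcal{H}$, followed by substituting the two hypothesized identities to obtain $(k\text{-}1)\textbf{-}f^{'} \subseteq k\textbf{-}f^{'}$. Your added remark on nonemptiness of the two intersections is a harmless extra observation not present in (and not needed by) the paper's version.
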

\begin{proof}
    $k\textbf{-}f \subseteq (k\text{+}1)\textbf{-}f$ indicates $k\textbf{-}f \cap \mathcal{H} \subseteq (k\text{+}1)\textbf{-}f \cap \mathcal{H}$. Therefore, $(k\text{-}1)\textbf{-}f^{'} \subseteq k\textbf{-}f^{'}$ can be derived.
\end{proof}

\section{Reachability Analysis with Face Lattices}

In this section, we use $\mathcal{I}_{k}$ and $\mathcal{O}_k$ to represent the input and output polytope sets of the $k$th layer. By substituting the $X_k$ with $\mathcal{I}_k$, and the output with $\mathcal{O}_k$ in the Equation \ref{equ:layer}, we obtain
\begin{equation*}
    \mathcal{O}_k =  ReLU(W_{(k,k\text{-}1)}\mathcal{I}_{k} + b_k)
\end{equation*} which indicates the three basic operations on polytopes. The operation on $\mathcal{I}_k$ inside the parentheses is the \textit{affine transformation}. The function $ReLU(*)$ indicates a sequence of\textit{ intersection and division}, and \textit{projection} on the transformed polytope. As claimed in the abstract, our approach can derive the complete input set given an output set. It is realized by incorporating the transformation tuple in Definition \ref{def:tuple} to maintain mathematical relation between an intermediate polytope $P_k$ and a subset of the initial input $P_0$ to the neural network. Thus, any violation of safe properties in output can be mapped back into the input. From the perspective of the \textit{linear region} presented in Introduction section, the Equation \ref{equ:trans} is one linear function and $P_{0}^{sub}$ is one linear region. 

\begin{definition}[Transformation Tuple]
    Given an input polytope $P_0$ to the neural network and a polytope $P_k \in \mathcal{O}_k$, a transformation tuple is denoted by $P_k \text{=}\langle P_{0}^{sub}, M_k, d_k \rangle$ where $P_{0}^{sub} \subseteq P_0$, such that 
    \begin{equation}
        P_k = M_kP_{0}^{sub} + d_k 
        \label{equ:trans}
    \end{equation}
    \label{def:tuple}
\end{definition}

\subsection{Processing of Transformation Tuple}
Take the output computation of the first layer for example, suppose we have an initial input $P_0$, then its transformation tuple can be initialized as $\langle P_0^{sub}, M_0, d_0\rangle$ where $P_0^{sub} = P_0$, $M_0$ is an identity matrix, and $d_0$ is an array of zeros. After the \textit{affine transformation } by weights $W_{(1,0)}$ and $b_{(1,0)}$, the $P_0^{sub}$ stays unchanged and the tuple will be updated to $\langle P_0^{sub}, M_1, d_1\rangle$, where 
\begin{equation}
     M_{1} = W_{(1,0)}M_0, ~~~ d_{1} = W_{(1,0)}d_0 + b_{(1,0)}
     \label{equ:linearTransform}. 
\end{equation} 

Let $P_1$ be this updated tuple. Afterwards, the \textit{intersection and division} operation will be conducted on this new tuple for each neuron in this layer. Suppose we start by processing the first neuron, we can first derive a hyperplane $\mathcal{H}_1: \textbf{a}^{\top}_1\textbf{x}+c_1 = 0$ where $\textbf{a}_1$ is an array with its first element $\textbf{a}[1]=1$ and the rest being zeros, and $c_1\text{=}0$. When considering the $i$the neuron, the $\textbf{a}[i]=1$ and the rest keeps the same. According to the ReLU function, the subset of $P_1$ that is in the closed halfspace $\textbf{a}^{\top}_1\textbf{x}+c_1 \ge 0$ stays unchanged. For the subset that is in the closed halfspace $\textbf{a}^{\top}_1\textbf{x}+c_1 \le 0$, all its $\textbf{x}[1]$ is set to 0. These subsets can be identified by the $intersection$ operation. Instead of determining the intersection of $P_1$ with $\mathcal{H}_1$, we choose to check the intersection of its $P_0^{sub}$ with a mapped-back hyperplane $\mathcal{H}_0$ from $\mathcal{H}_1$  by Equation \ref{equ:vertice_multiply} and \ref{equ:mappedh}. This approach is directly operating the initial input subset $P_0^{sub}$, which is helpful for the further computation. Besides, the \textit{intersection and division} on the $P_0^{sub}$ is also the process of splitting and generating linear regions.

The method of mapping $\mathcal{H}_1$ into the space of $P_0^{sub}$ and obtaining $\mathcal{H}_0$ is as following. From the Equation \ref{def:tuple}, we can derive that 
\begin{equation}
        \textbf{x}_1 = M_1\textbf{x}_0^{sub} + d_1
        \label{equ:vertice_multiply}
\end{equation}
where $\textbf{x}_0^{sub} \in P_0^{sub}$  and $\textbf{x}_1 \in  P_1$. By combining this equation with $\mathcal{H}_1$, we have
\begin{align}
    \textbf{a}_1^{\top} \textbf{x}_1 + c_1 & = \textbf{a}_1^{\top}(M_1\textbf{x}_0^{sub} + d_1) +c_1 \nonumber \\
    & = (\textbf{a}_{1}^{\top}M_1)\textbf{x}_0^{sub} + (\textbf{a}_1^{\top}d_1 + c_1) = 0
    \label{equ:mappedh}
\end{align}
Thus, we can obtain the mapped hyperplane $\mathcal{H}_0: \textbf{a}^{\top}_0\textbf{x}_0^{sub}+c_0 = 0$, where $\textbf{a}^{\top}_0 = \textbf{a}_1^{\top}M_1$ and $c_0 =  \textbf{a}_1^{\top}d_1 + c_1$. 

During the \textit{intersection and division} operation, the processing of the tuple depends on different cases. In the first case, $P_0^{sub}$ will be splitted into two polytopes $P_{0}^{sub+}$ and $P_{0}^{sub-}$ by the hyperplane $\mathcal{H}_0$. Correspondingly, the original tuple of $P_1$ will also be splitted and we achieve
\begin{equation*}
    P_1^+ = \langle P_{0}^{sub+}, M_1, d_1 \rangle,\ P_1^- = \langle P_{0}^{sub-}, M_1, d_1 \rangle
\end{equation*}
According to the property of the ReLU function, $P_1^+$ stays unchanged and no \textit{projection} operation is needed. While the points of the $P_1^-$ should be mapped to the $\mathcal{H}_1$ where the first element $\textbf{x}[1]$ of them will be changed to zero. As shown in the following equation, this operation can be realized by left multiplying $I_1$ with $P_1^-$ where $I_{1}$ is an identify matrix with the first diagonal entry being zero. When processing with the $i$th neuron, $I_i$ will be an identify matrix with $i$th diagonal entry being zero.
\begin{align}
    P_1^{'-} & = I_1P_1^{-} = I_1(M_1P_0^{sub-} + d_1) \nonumber \\
             & = (I_1 M_1) P_0^{sub-} + (I_1 d_1)
\label{equ:projection}
\end{align}
After the \textit{projection}, the $P_1^{'-} = \langle P_{0}^{sub-}, M_1', d_1' \rangle$, where
\begin{equation}
    M_1' = I_1 M_1, \ d_1' = I_1 d_1. 
    \label{equ: md_update}
\end{equation}
While in the second case, only $P_{0}^{sub+}$ is generated from the \textit{intersection and division} on $P_{0}^{sub}$. Then $P_{0}^{sub+} = P_{0}^{sub}$ and no \textit{projection} operation is needed. In the third case, only $P_{0}^{sub-}$ is generated where $P_{0}^{sub-} = P_{0}^{sub}$. The \textit{projection} is applied though Equation \ref{equ:projection} and \ref{equ: md_update}. Overall, the first case yields two intermediate polytopes $P_1^{+}$ and $P_{1}^{'-}$. The second and third cases respectively yields $P_1 = \langle P_{0}^{sub}, M_1, d_1 \rangle$ and $P_1 = \langle P_{0}^{sub}, M'_1, d'_1 \rangle$.

This process is computing the output of one neuron, which is denoted by $\mathbb{N}(*)$ in Equation \ref{equ:layerk}. Afterward, these output polytopes will be the input to the next neuron, This process is repeated until all the neurons in the first layer are considered. Then the polytopes obtained are the output of the first layer. The computing output of one layer to its inputs is denoted by $\mathbb{L}(*)$. For a formal description, let $\mathbb{N}_{j}^{[k]}$ represent the sequence of \textit{affine transformation}, \textit{intersection and division}, and \textit{projection} for the $j$th neuron in the $k$th layer, and $\mathcal{S}_k$ be the output set of this layer with respect to an input set to this layer, then we have 
\begin{equation}
   \mathcal{S}_{k} = \mathbb{N}_{m}^{[k]}(\mathbb{N}_{m\text{-}1}^{[k]}(\dots(\mathbb{N}_1^{[k]}(\mathcal{S}_{k-1}))))
   \label{equ:layerk}
\end{equation}.
Let $\mathbb{L}_{k}$ be the process of Equation \ref{equ:layerk}, and $\mathcal{O}_k$ be the output of the $k$th layer to an input $P_0$ to the network, then we have
\begin{equation}
    \mathcal{O}_k = \mathbb{L}_{k}(\mathbb{L}_{[k\text{-}1]}(\dots (\mathbb{L}_1(P_0))))
    \label{equ:output_klayer}
\end{equation}

\subsection{Mapping Back to Input}
Let the unsafe range of output be a set of halfspaces $\mathcal{H}_{us}: \textbf{a}^{\top}_u\textbf{x} + c_u \leq 0$. Suppose we have an output polytope $P_k\text{=}\langle P_0^{sub}, M_k, d_k \rangle$. Similarly with the Equation \ref{equ:vertice_multiply}, we have 
\begin{equation*}
    \textbf{x}_k = M_k\textbf{x}_0^{sub} + d_k 
\end{equation*}
By combining $\mathcal{H}_{us}$ with this equation, we can derive that 
\begin{align}
    \textbf{a}^{\top}_u \textbf{x}_k + c_u & = \textbf{a}^{\top}_u(M_k\textbf{x}_0^{sub} + d_k) +c_u \nonumber \\
    & = (\textbf{a}^{\top}_u M_k)\textbf{x}_0^{sub} + (\textbf{a}_u^{\top}d_k + c_u) \leq 0
    \label{equ:mapH}
\end{align}
and that $\textbf{a}^{\top}_0 = \textbf{a}^{\top}_u M_k$ and $c_0 = \textbf{a}^{\top}_u d_k + c_u$ are parameters of the mapped halfspace in the $P_0^{sub}$'s space. To determine unsafety, we can check the existence of vertices that are contained in the mapped halfspaces. To extract the unsafe set in the input $P_0$, we can apply a series of the\textit{ intersection and division} operations to the $P_0^{sub}$s (linear regions) with the $\mathcal{H}_{us}$. Negative polytopes generated from such operation are the input sets that violate the safety requirement.

\begin{algorithm}[tb]
    \caption{Reachable set computation of a neural network}
    \label{al:all}
    \textbf{Input}: $n, P_0$ \quad  \comm{\# layer index, input polytope}\\
    \textbf{Output}: $\mathcal{O}_n$ \quad\comm{\# reachable set of the $n$th layer}
    \begin{spacing}{1.1}
    \begin{algorithmic}[1]
        \Procedure{$\mathcal{O}_n$ = layerOutput}{$n, P_0$}
            \State $\mathcal{I}_{k} = P_0$  \quad \comm{\# $\mathcal{I}_{k}$: input set to the $k$th layer}
            \For{$k = 1:n$} 
                \State $\mathcal{O}_{k} = empty$ \quad\comm{\# $\mathcal{O}_{k}$: output set of the $k$th layer}
                \For{(\textbf{parfor})~~$P$ in $\mathcal{I}_{k}$}
                    \State $\mathcal{S}_{k}$ = singleLayerOutput($k$, $P$) 
                    \State add $\mathcal{S}_{k}$ to $\mathcal{O}_k$
                \EndFor
                \State $\mathcal{I}_{k} = \mathcal{O}_{k}$
            \EndFor
            \State \textbf{return} $\mathcal{O}_n = \mathcal{O}_{k}$
        \EndProcedure 
        
        \Procedure{$\mathcal{S}_{k}$ = singleLayerOutput}{$k$, $P$}
            \State $\mathcal{S}_{k}$ = linearTransform($k$, $P$)
            \For{$i$ = $1:m$} \quad\comm{\# $m$: the number of neurons}
                \State $\mathcal{S}_{temp} = empty$
                \For{$P'$ in $\mathcal{S}_{k}$}
                    \State $P'^{+}$, $P'^{-}$ = intersectDivide($i$, $P'$)
                    \If{$P'^{-}$ is not none}
                        \State $P'^{-}$ = projectionHyerplane($i$, $P'^{-}$)
                    \EndIf
                    \State add  $P'^{+},P'^{-}$ to $\mathcal{S}_{temp}$
                \EndFor
                \State $\mathcal{S}_{k} = \mathcal{S}_{temp}$
            \EndFor
            \State \textbf{return} $\mathcal{S}_{k}$
        \EndProcedure
    \end{algorithmic}
    \end{spacing}
\end{algorithm}

\subsection{Parallelizable Algorithm}
Two potential parallelizable algorithms are proposed in this section based on the fact that input polytopes to one network layer are independent.  One is processing in parallel all the input polytopes to one layer until all their output is completed, and then repeating this procedure to the following layers until the final output of the neural network is achieved. This strategy is described in Algorithm \ref{al:all} where it is parallelized by replacing \textbf{for} with \textbf{parfor} in Line 5. To simplify this presentation, we assume that the output layer of neural networks is also with ReLU function . Accordingly, when $n$ is the number of layers in the neural network, the output of this algorithm is the final reachable set of the neural network. The description of functions are as following:
\begin{enumerate}
    \item[(1)] \textbf{layerOutput}() corresponds to the Equation \ref{equ:output_klayer}. The $\mathcal{S}_k$ and $\mathcal{O}_k$ corresponds to the symbols in Equation \ref{equ:layerk} and \ref{equ:output_klayer}, respectively.
    \item[(2)] \textbf{singleLayerOutput}() corresponds to the Equation \ref{equ:layerk}. 
    \item[(3)] \textbf{linearTransform}() corresponds to the Equation \ref{equ:linearTransform}. The $P$ is an element polytope, and the $\mathcal{S}_k$ is initialized with the transformed $P$.
    \item[(4)] \textbf{intersectDivide}() corresponds to the operation of\textit{ intersection and division.} by a hyperplane as shown in the (b), (c) and (d) of Figure \ref{fig:combine}. Its output are one $positive$ sub-polytope $P'^{+}$ and one $negative$ sub-polytope $P'^{-}$.
    \item[(5)] \textbf{projectionHyerplane}() corresponds to the projection of the $negative$ sub-polytopes on a hyperplane determined by the $i$th neuron, as shown in Equation \ref{equ:projection}. 
\end{enumerate}

\algblock{ParFor}{EndParFor}
\algnewcommand\algorithmicparfor{\textbf{parfor}}
\algnewcommand\algorithmicpardo{\textbf{do}}
\algnewcommand\algorithmicendparfor{\textbf{end\ parfor}}
\algrenewtext{ParFor}[1]{\algorithmicparfor\ #1\ \algorithmicpardo}
\algrenewtext{EndParFor}{\algorithmicendparfor}
\algtext*{EndParFor}

\begin{algorithm}[ht][tb]
    \caption{Alternative parallel algorithm}
    \label{al:all2}
    \textbf{Input}: $n, P_0$ \quad  \comm{\# layer index, input polytope}\\
    \textbf{Output}: $\mathcal{O}_n$ \quad\comm{\# reachable set of the $n$th layer}
    \begin{spacing}{1.1}
    \begin{algorithmic}[1]
        \Procedure{$\mathcal{O}_n$ = reachCompute}{$n, P_0$}
            \State $\mathcal{O}_j$ = layerOutput($j$, $P_0$) \ \comm{\# function in Algorithm 1}
            \ParFor{$P$ in $\mathcal{O}_j$}
                \State $\mathcal{I}_k = P$
                \For{$k = j:n$}
                    \State $\mathcal{O}_k = empty$
                    \For{$P'$ in $\mathcal{I}_k$}
                        \State $\mathcal{S}_{k}$ = singleLayerOutput($k$, $P'$) 
                        \State add $\mathcal{S}_k$ to $\mathcal{O}_k$
                    \EndFor
                    \State $\mathcal{I}_k$ = $\mathcal{O}_k$
                \EndFor
                \State add $\mathcal{O}_k$ to $\mathcal{O}_n$
            \EndParFor
            \State \textbf{return} $\mathcal{O}_n$
        \EndProcedure 
    \end{algorithmic}
    \end{spacing}
\end{algorithm}

However, when programmed in Python using its embedded \textbf{multiprocessing} package, this parallel algorithm will suffer from a computational burden due to copying the parent process memory in the computation of each layer. It means that when the function \textbf{singleLayerOutput()} is invoked in parallel, the memory in the parent process will be copied to those multiple created children processes. This memory copying occurs in each network layer and leads to an extra-large computational burden. This phenomenon is significant when there is a large number of polytopes in $\mathcal{I}_k$.

Therefore, an alternative method is inherited from the first one to handle that situation. Instead of processing polytopes in parallel in each layer, we choose to start processing them in parallel from a specific layer. The details are illustrated in Algorithm \ref{al:all2}. It first computes the output polytopes of the $j$ layer to the input $P_0$ by invoking the function \textbf{layerOutput} in Algorithm \ref{al:all}. Then, the algorithm from Line 3 to the end will be executed. In this process, the memory copying only happens once in Line 3 with a relatively smaller number of polytopes in $\mathcal{O}_j$. Therefore, it outperforms Algorithm \ref{al:all}.  For the complexity, the maximum number of output sets that both of algorithms can compute for a network with total $m$ hidden neurons and one input set is $2^m$. In practice, the actual number is much smaller since the \textit{intersection and division} doesn't really split intermediates polytopes in every neuron.

\section{Evaluation}

\subsection{Safety Verification of ACAS Xu Networks}
In this section, we evaluate our method against the Reluplex, Marabou, and NNV. They are currently well-known approaches that also do the exact reachability analysis of feed-forward neural networks with ReLU activation functions. The evaluation task is to verify the safety of ACAS Xu networks in \cite{julian2016policy}. Given an input domain and an unsafe domain, Reluplex and Marabou solve the satisfiability problem, which is they only determine if there are outputs satisfying unsafe conditions. While our method and NNV are computing all the exact reachable sets of the output and then determining their intersection with the unsafe area. 

The ACAS Xu networks are used to approximate a large lookup table that converts sensor measurements into maneuver advisories in an Airborne Collision Avoidance System so that the massive memory occupation by the table and the lookup time can be reduced. The set of networks contain $45$ fully-connected DNNs for different combinations of discretized parameters. Each one has five inputs, five outputs, and six hidden layers. Each layer consists of fifty neurons with ReLU activation functions. Therefore there are $300$ hidden neurons total in each network. The five inputs consist of the sensor measurements: (1) $\rho$: Distance from ownship to intruder; (2) $\theta$: Angle to intruder relative to ownship heading direction; (3) $\psi$: Heading angle of intruder relative to ownship heading direction; (4) $v_{own}$: Speed of ownship; (5) $v_{int}$: Speed of intruder. The outputs are advisory scores for five different actions, where the lowest score corresponds to the best action. The five actions are respectively, clear of conflict (COC), weak right, strong right, weak left, and strong left. There are ten safety properties $\phi_1,\dots,\phi_{10}$ which specify input bounds and linear constraints on the output. The details can be found in the Appendix of \cite{katz2017reluplex}.

In the experiment, All the 45 networks are tested on Property 1,2,3 and 4. The hardware configuration is Intel
Core i7-6700 CPU @3.4GHz$\times$8 Processor, 64 GB Memory, 64-bit Ubuntu 18.04\urlstyle{sf}\footnote{Codes are available online at \urlstyle{rm}\url{https://github.com/verivital/FaceLattice}}. The summary of results are shown in Table \ref{tab:summary}. The details are shown in Table \ref{tab:p1},\ref{tab:p2},\ref{tab:p3} and \ref{tab:p4} in Appendix. In the implementation, Marabou DNC, NNV, ReluVal and our method that support parallel computation are all assigned with 8 processors. While Reluplex and Marabou is implemented in a nonparallel way. The following is the evaluation of each method:
\begin{itemize}
    \item{\textbf{Reluplex}:} As the summary indicates, Reluplex exhibits a relatively lower efficiency compared to other methods. Besides, an incorrect result is found in our implementation. The test on the $network_{17}$ and  Property 2 identifies an adversarial input that generates an output $[-0.00, 0.30, 0.18, 0.19, 0.20]$ . While the desired output for Property 2 is that the score for COC (the first element of output) is not maximum. Clearly, this output doesn't violate this condition. 
    \item{\textbf{Marabou}:} For the performance of Marabou, It shows an improvement in the efficiency compared to Reluplex. But It still takes a significant time to complete tasks. Its scalability to larger and deeper neural networks is unclear.
    \item{\textbf{Marabou DnC}:} It is a divide-and-conquer solving mode of the Marabou, where the input range is partitioned into sub-ranges. Compared to Reluplex and Marabou, it shows a higher efficiency on Property 1 and 2. However, it fails to outperform them on Property 3 and 4. This performance inconsistency is related to its configuration. Its configuration consists of parameters such as initial-divides and online divides flags which are the number of times to bisect the input region. Such parameters can greatly affect performance. Besides, their optimal selection differs between different networks. Therefore, their optimal performance can't be always achieved. The parameters we applied for this implementation are the default values.
    \item{\textbf{ReluVal}:} Compared to other tools, it exhibits a higher efficiency in some cases such as Property 1, but it can be very slow in other cases such as Property 2. Its performance is also inconsistent. ReluVal is constructed with two modes, CHECK\_ADV\_\\MODE and CHECK\_NORM\_MODE. The first mode can quickly breathe search for counter-examples but it can't guarantee to identify them if they truly exist. While the second mode can return adversarial examples if exist but its high efficiency can't be guaranteed. As recommended by their authors, The first mode is applied in the beginning. For those tasks without finding adversarial examples, the second mode is then applied. But the performance of the second mode can be greatly affected by its parameters in the configuration. For instance, the parameter "depth" indicates from which depth the algorithm should start checking adversarial examples. Its optimal value also differs between networks. Thus, its optimal performance can't be guaranteed. The parameters we utilized are default values for this experiment. 
    \item{\textbf{NNV Exact Star}:} It shows high efficiency in Property 3 and 4 but fails to complete most of the tasks in Property 1 and 2. As introduced in the beginning, this inconsistency is probably related to the optimization processes involved in the computation of output sets.
    \item{\textbf{Face Lattice}:} Our method shows a relatively higher and more consistent efficiency. In Property 1, our method fails to outperform ReluVal because we compute the exact output sets first and then solve the satisfiability problem while the large input range of Property 1 yields a significant number of linear regions to compute. However, the output sets obtained from the test on Property 1 can be reused for Property 2 where networks have the same output reachable sets.
\end{itemize}

\begin{table}[h]
\centering
\renewcommand{\arraystretch}{1.3}
\renewcommand{\tabcolsep}{1.3mm}
\begin{adjustbox}{angle=0, max width=\textwidth}
\begin{tabular}{l|cccc|cccc}
\hline
\multirow{2}{*}{\textbf{Methods}} &   \multicolumn{4}{c}{\textbf{ACAS XU $\phi_1$}} & \multicolumn{4}{c}{\textbf{ACAS XU $\phi_2$}} \\ \cline{2-9}
 {} & SAT & UNSAT    & TIMEOUT      & TIME &SAT &UNSAT  &TIMEOUT      & TIME  \\  \cline{1-1}
Reluplex          & 0                   & 28         & 17                 & $>462208$        & \red{40}                   & \red{3}                      & 2                 & $>153292$                      \\ 
Marabou           & 0                   & 38         & 7                 & $>335933$         & 39                   & 3                      & 3                 & $>160024$                   \\ 
Marabou DnC       & 0                   & 45         & 0                 & 99754         & 39                   & 4                      & 2                 & $>89294$                  \\ 
ReluVal           & 0                   & 45         & 0                 & 467          & 39                   & 6                      & 0                & 28019                    \\ 
NNV Exact Star    & 0                   & 14         & 31                 & $>711271$        & 10                   & 4                      & 31                & $>370$                    \\ 
Our Method   & 0                   & 45         & 0                 & 31312            & 39                   & 6                      & 0                & 255                 \\ \hline 
\end{tabular}
\end{adjustbox}
\end{table}
\begin{table}[h]
\centering
\renewcommand{\arraystretch}{1.3}
\renewcommand{\tabcolsep}{1.3mm}
\begin{adjustbox}{angle=0, max width=\textwidth}
\begin{tabular}{l|cccc|cccc}
\hline
\multirow{2}{*}{\textbf{Methods}} &   \multicolumn{4}{c}{\textbf{ACAS XU $\phi_3$}} & \multicolumn{4}{c}{\textbf{ACAS XU $\phi_4$}} \\ \cline{2-9}
 {} & SAT & UNSAT    & TIMEOUT      & TIME &SAT &UNSAT  &TIMEOUT      & TIME  \\  \cline{1-1}
Reluplex          & 3                   & 42         & 0                 & 28454        & 3                   & 42                      & 0                 & 11880                      \\ 
Marabou           & 3                   & 42         & 0                 & 19466         & 3                   & 42                      & 0                 & 8470                   \\ 
Marabou DnC       & 3                   & 41         & 1                 & $>56322$         & 3                   & 42                      & 0                 & 25110                  \\ 
ReluVal           & 3                   & 42         & 0                 & 759          & 3                   & 42                      & 0                & 56                    \\ 
NNV Exact Star    & 3                   & 42         & 0                 & 7457        & 3                   & 42                      & 0                & 1157                    \\ 
Our Method   & 3                   & 42         & 0                 & 511            & 3                   & 42                      & 0                & 296                 \\ \hline 
\end{tabular}
\end{adjustbox}
\caption{Summary of the performance of each method on Property 1,2,3 and 4. The SAT means there exists output that satisfies the unsafe condition, which indicates that the neural network is unsafe. While the UNSAT means that there doesn't exist such output, which indicates the network is safe. The TIMEOUT is set to 5h for all the tasks. The TIME represents the total running time (seconds) of one method on 45 networks.}
\label{tab:summary}
\end{table}
\begin{figure}[ht]
    \centering
    \includegraphics[scale = 0.45]{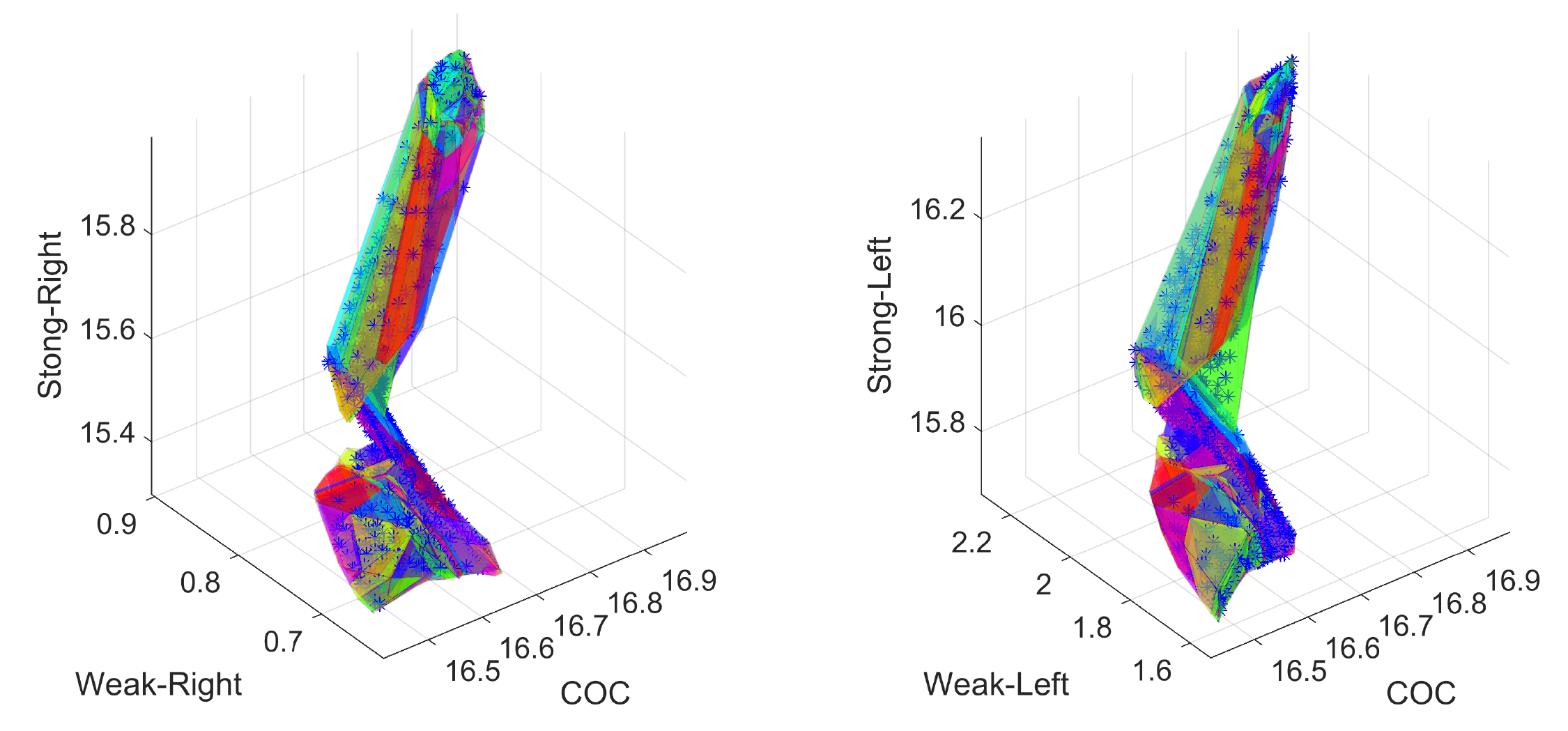}
    \caption{Exact reachable set of the network $N_{4\_7}$ on Property 4. The blue star points are the output w.r.t. 2000 random input samples. They all locate on or inside the output sets.}
    \label{fig:N37}
\end{figure}
\begin{figure}[ht]
    \centering
    \includegraphics[scale = 0.45]{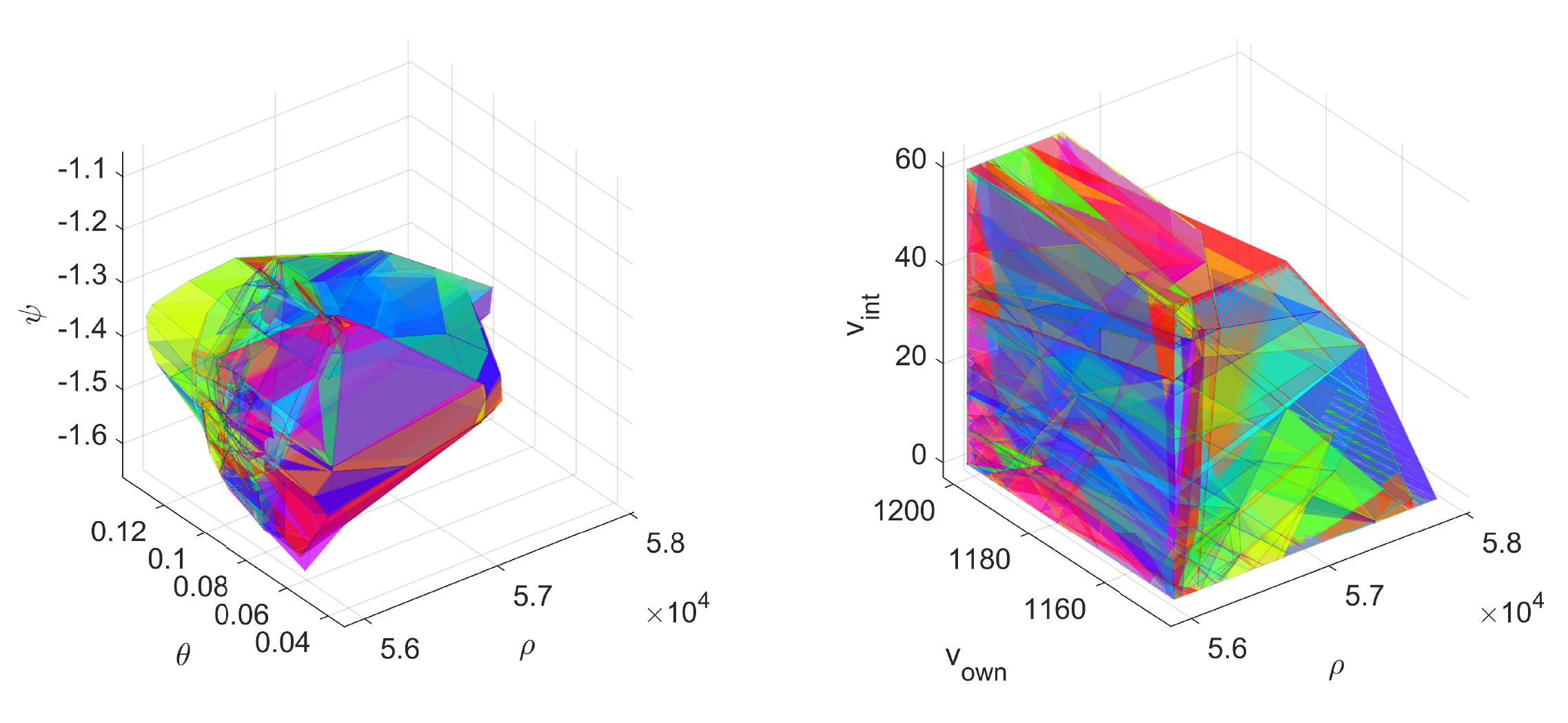}
    \caption{Complete input sets that lead to the property $\phi_{2}$ violation on the network $N_{1\_2}$. It is a union of 104 polytopes.}
    \label{fig:3_7_track}
\end{figure}

In addition to the high efficiency, our approach can also visualize the exact reachability of a neural network so that an intuitive inspection can first be conducted for safety properties. An example of the exact reachable set of the network $N_{4\_7}$ is demonstrated in Figure \ref{fig:N37}, whose lower bound and upper bound of the input are $[1500, \text{-}0.06, 3.1, 1000,\\ 700]$ and $[1800, 0.06, 3.14, 1200, 800]$. Additionally, 2000 random inputs are sampled to demonstrate the method's correctness. The reachable sets are visualized separately in two figures. With such output sets, we can verify different safety properties, instead of running the algorithm repeatedly for each property as Reluplex and Marabou do. Besides, our method can also compute the exact input set that leads to property violations. For instance, the inputs that make the network $N_{1\_2}$ violate Property2 are computed as shown in Figure \ref{fig:3_7_track}. It is a union of 104 polytopes. The element belonging to these polytopes will yield property violations. For the element that doesn't belong to them but are contained in the input range, they won't cause any violation.

\subsection{Reachability Analysis of Microbenchmarks}
To further evaluate our approach, we compare them on a set of microbenchmarks that are proposed in work \cite{dutta2018output}. These benchmarks consisting of neural networks are created from two different sources. The networks $N_2\text{-}N_4$ are trained with some analytical functions. While the networks $N_{11}\text{-}N_{14}$ come from the unwindings of a closed-loop controller and a plant model. Here we check these networks' safety on some synthetic unsafe domains. Only Marabou DnC and NNV are considered because they support parallel computation. ReluVal is not included as we are not able to apply ReluVal to these new networks. Problems have been reported to the author but haven't been solved. For the networks $N_2\text{-}N_4$ which have only one hidden layer, we equally partition the input range for our method and NNV into 1024 subsets, so that the parallel computation can be applied. The details of each benchmark and the performance of each approach are shown in Table \ref{tab:2}. Most of the networks consist of thousands of neurons. Dealing with such a large size of networks, our approach also exhibits a high efficiency. While other methods fail to complete all the tasks.
\begin{table}
\footnotesize
\renewcommand{\arraystretch}{1.3}
\renewcommand{\tabcolsep}{1.3mm}
\centering
\begin{tabular}{c|ccc|cc|c|cc}
\hline
\multirow{2}{*}{\textbf{ID}} & \multirow{2}{*}  & \multirow{2}{*}  & \multirow{2}{*} & \multicolumn{2}{c|}{\textbf{Our Method}} & \textbf{Marabou DnC} &\multicolumn{2}{c}{\textbf{NNV}}  \\ \cline{2-9} 
                    & {$x$}                      & {$k$}                      & {$m$}                     & $N_p$            & $time$           & $time$  & $N_p$  & $time$ \\ \cline{1-9} 
$N_{2}$             & 2                     & 1                     & 500                  & 16760             & 25             & 36        & 16760  &635      \\ 
$N_{3}$             & 2                     & 1                     & 500                  & 4339             & 8              & 15         & 4339  & 53      \\ 
$N_{4}$             & 2                     & 1                     & 1000                 & 192709           & 803           & 157        &$\times$ & TIMEOUT      \\ 
$N_{11}$            & 3                     & 9                     & 1427                 & 25312            & 449            & TIMEOUT      & 25312     & 1824     \\ 
$N_{12}$            & 3                     & 14                    & 2292                 & 25312            & 837            & TIMEOUT       & 25312    & 1690 \\ 
$N_{13}$            & 3                     & 19                    & 3057                 & 462833            & 20033           & TIMEOUT     &$\times$   & TIMEOUT      \\ 
$N_{14}$            & 3                     & 24                    & 3822                & 236062            & 13991           & TIMEOUT      &$\times$  & TIMEOUT      \\ \hline
\end{tabular}
\caption{Performance results on microbenchmarks. The label $x$, $k$ and $m$ respectively denote the number of input, the number of layers and total number of ReLU neurons. $N_p$ is the number of output reachable sets and $time$($second$) is the running time. The TIMEOUT is set to 10 hours.}
\label{tab:2}
\end{table}

\section{Conclusion and Future Work}
 The exact reachability analysis of neural networks is a hard problem, but it is essential to conducting robustness verification.  In this paper, we presented a polytope-based and parallelizable method to conduct exact reachability analysis of the feed-forward neural networks with ReLU activation. Instead of using the $H$ and $V$ representations, we construct polytopes with the face lattice, a structure encoding complete combinatorial information. Through comparing to state-of-the-art methods like Reluplex, Marabou, NNV, and ReluVal, we can verify its correctness and high efficiency. As discussed, the exact reachable set can be reusable for different properties. The comparison includes the verification of a set of real-world ACAS Xu neural networks and the reachability computation of some other benchmarks. To improve the efficiency and convenience, we can also partition a wide-range input into smaller subsets and build a library for their reachable sets so that further analysis of any input contained in such range can be reduced to merely looking up the library. Overall, there are multiple directions to improve our approach. In the future, we plan to explore more concise and efficient polytope representations than the face lattice because it becomes complicated for high dimensional polytopes. Another potential enhancement is to detect the polytopes in the input set that determine the output range so that the redundant polytopes can be ignored and the computation burden can be reduced. Additionally, we also consider extending our approach to convolutional neural networks (CNNs) in the future.

\newpage

\bibliographystyle{splncs04}
\bibliography{nfm.bib}

\appendix

\begin{table}
\centering
\renewcommand{\arraystretch}{1.5}
\renewcommand{\tabcolsep}{1.5mm}
\begin{adjustbox}{angle=0, max width=\textwidth}

\end{adjustbox}
\caption{Property 4}
\label{tab:p4}
\end{table}

\end{document}